\pgfplotsset{compat = newest}
\newcommand{\theinlineequation}{(\arabic{equation})}
\newcommand{\inlineeq}[1]{\refstepcounter{equation}\theinlineequation\ \(#1\)}
\newcolumntype{M}{R@{${}={}$}L}
\newtheorem{rem}{Remark}
\newtheorem{thm}{Theorem}
\newtheorem{lem}{Lemma}
\newtheorem{cor}{Corollary}
\newtheorem{prop}{Proposition}
\newtheorem{assum}{Assumption}
\newcommand\tran{\mkern-2mu\raise1.25ex\hbox{$\scriptscriptstyle\top\hspace{0.5mm}$}\mkern-3.5mu}
\newcommand{\R}{\mathbb{R}}
\newcommand{\N}{\mathbb{N}}
\newcommand{\C}{\mathcal{C}}
\newcommand{\D}{\mathcal{D}}
\newcommand{\X}{\mathcal{X}}
\newcommand{\bm}[1]{{\boldsymbol{#1}}}
\newcommand{\Verts}[1]{{\left\Vert #1 \right\Vert}}
\DeclareMathOperator{\diag}{diag}
\DeclareMathOperator{\var}{var}
\DeclareMathOperator{\mean}{\mu}
\DeclareMathOperator{\Var}{\Sigma}
\DeclareMathOperator{\Mean}{\bm\mu}
\newcommand{\ev}[1]{\operatorname{E}\left[{#1}\right]}
\DeclareMathOperator{\tr}{Tr}
\newcommand{\GP}{\mathcal{GP}}
\newcommand{\x}{\bm x}
\newcommand{\f}{\bm{f}}
\newcommand{\y}[1][]{\bm{y}_{#1}}
\newcommand{\ubar}[1]{\underline{#1}}
\crefname{rem}{Remark}{Remarks}
\crefname{assum}{Assumption}{Assumptions}
\crefname{prop}{Proposition}{Propositions}
\crefname{cor}{Corollary}{Corollaries}
\crefname{lem}{Lemma}{Lemmas}
\crefname{thm}{Theorem}{Theorems}
\crefname{defn}{Definition}{Definitions}
\crefname{figure}{Fig.}{Fig.}
\Crefname{figure}{Figure}{Figures}
\crefname{equation}{}{}
\title{\LARGE \bf
Mean Square Prediction Error of Misspecified\\ Gaussian Process Models
}
\author{Thomas Beckers, Jonas Umlauft and Sandra Hirche
\thanks{The authors are with the Chair of Information-oriented Control (ITR), Department of Electrical and Computer Engineering,
Technical University of Munich, 80333 Munich, Germany\newline
{\tt\small \{t.beckers, jonas.umlauft, hirche\}@tum.de}}
}
\begin{document}
  \setlength{\abovedisplayskip}{5pt}
\setlength{\belowdisplayskip}{5pt}
\setlength{\abovedisplayshortskip}{5pt}
\setlength{\belowdisplayshortskip}{5pt}

\maketitle
\thispagestyle{empty}
\pagestyle{empty}

\begin{abstract}
Nonparametric modeling approaches show very promising results in the area of system identification and control. A naturally provided model confidence is highly relevant for system-theoretical considerations to provide guarantees for application scenarios. Gaussian process regression represents one approach which provides such an indicator for the model confidence. However, this measure is only valid if the covariance function and its hyperparameters fit the underlying data generating process. In this paper, we derive an upper bound for the mean square prediction error of misspecified Gaussian process models based on a pseudo-concave optimization problem. We present application scenarios and a simulation to compare the derived upper bound with the true mean square error.
\end{abstract}

\section{Introduction}
Nonparametric or so-called data-driven models are an uprising modeling approach for the identification and control of systems with unknown dynamics. In contrast to classical parametric techniques, the idea is to let the data speak for itself without assuming an underlying, parametric model structure~\cite{pillonetto2014kernel}. Nonparametric models require only a minimum of prior knowledge for the regression of complex functions since the complexity of the model scales with the amount of training data~\cite{rasmussen2006gaussian}. Once a model of a system is learned from data, standard control laws such as model predictive control or feedback linerarization can be sucessfully applied~\cite{chowdhary2012model,umlauft:cdc2017}.\\
A general problem of data-driven models is the estimation of the model accuracy which is usually necessary for robust control design and stability considerations~\cite{zhou1998essentials}. For that reason, Gaussian process (GP) models are a promising nonparametric approach for control because they provide not only a mean prediction, but also a variance as uncertainty measure of the model. Specifically, a GP assigns to every point of an input space a normally distributed random variable. Any finite group of those random variables follows a multivariate Gaussian distribution, and in consequence there exists an analytic solution for the predicted mean and variance of a new test point.\\
The variance of the prediction is exploited in many different kinds of control approaches~\cite{medina2015synthesizing,beckers:cdc2017,kocijan2004gaussian}. However, the variance as prediction error measure is only valid if the GP model fits the data generating process, see~\cref{fig:CL}. 
\begin{figure}[t]
\begin{center}
\vspace{0.15cm}
	\begin{tikzpicture}
\begin{axis}[
  xlabel={Input space},
  ylabel={Output space},
  legend pos=north west,
  grid style={dashed,gray},
  grid = both,
       width=\columnwidth,
  height=5cm,
  ymin=-0.5,
  ymax=4,
  xmin=-5,
  xmax=5,
  legend style={font=\footnotesize},
  legend cell align={left},
  xticklabels={,,},
  yticklabels={,,}]
\addplot+[name path=varp1, color=gray,opacity=0.3, no marks] table [x index=0,y expr=\thisrowno{1}+\thisrowno{2}]{data/figure1_model_GP.dat};
\addplot+[name path=varm1, color=gray,opacity=0.3, no marks] table [x index=0,y expr=\thisrowno{1}-\thisrowno{2}]{data/figure1_model_GP.dat};
\addplot[dashed,color=red,line width=1pt] table [x index=0,y index=1]{data/figure1_model_GP.dat};
\addplot[gray,opacity=0.5] fill between[ of = varm1 and varp1]; 
\addplot[color=NavyBlue,line width=1pt] table [x index=0,y index=3]{data/figure1_model_GP.dat};
\addplot[color=NavyBlue,line width=1pt] table [x index=0,y index=4]{data/figure1_model_GP.dat};
\addplot[color=NavyBlue,line width=1pt] table [x index=0,y index=5]{data/figure1_model_GP.dat};
\addplot[only marks,color=black,line width=1.5pt,mark=+,mark size=5] table [x index=0,y index=1]{data/figure1_points.dat};
\legend{,,estimated mean,estimated 3-sigma,true samples,,,training points};
\end{axis}
\end{tikzpicture} 
	\vspace{-0.25cm}\caption{The variance is misleading in terms of the model confidence.}\vspace{-0.8cm}
	\label{fig:CL}
\end{center}
\end{figure}
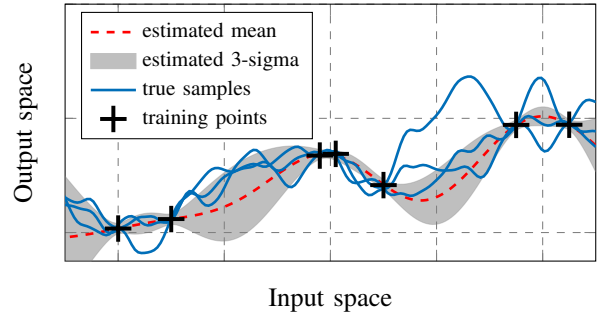
A GP model is fully described by a mean function, which is often set to zero~\cite{rasmussen2006gaussian}, and a covariance function. Although GPs with universal covariance functions often produce satisfactory results, the selection of a suitable covariance function is a nontrivial problem~\cite{seeger2000bayesian,pillonetto2011kernel}. In general, the problem is that only a finite data set is available to derive the covariance function. In addition, the covariance function typically depends on a number of hyperparameters. There exist many different methods to estimate these parameters based on the training data set, e.g. marginal likelihood optimization. However, the involved optimization problems are in general non-convex, such that the marginal likelihood may have multiple local optima~\cite{rasmussen2006gaussian}. Alternatively, there exists the cross validation approach which deals with a validation and test set to carry out the hyperparameter selection. Still, all of these methods do not guarantee that the covariance function and its hyperparameters fit the data generating process. As consequence, the variance of the GP model may not correctly estimate the real model confidence. A lower bound for the prediction error for GP models with a misspecified covariance is given by~\cite{waagberg2017prediction} whereas an upper bound is still missing. Using GP models in control, the upper bound is highly interesting for stability consideration based on robust control methods.\\
The contribution of this paper is the derivation of an upper bound for the mean square prediction error (MSPE) between an estimated GP model and a GP model with unknown covariance functions and hyperparameters. For this purpose, a set of possible covariance functions with corresponding hyperparameter sets must be given. We exploit the property that many commonly used covariance functions are pseudo-concave with respect to their hyperparameters. As consequence, the upper bound is the solution of pseudo-concave optimization problems. With additional assumptions, a closed form solution is provided.
\textbf{Notation:} Vectors are denoted with bold characters. Matrices are described with capital letters. The term~$A_{i,:}$ denotes the i-th row of the matrix~$A$. The expression~$\mathcal{N}(\mu,\Sigma)$ describes a normal distribution with mean~$\mu$ and covariance~$\Sigma$. The notation $\bm{a}\preccurlyeq\bm{b}$ describes the componentwise inequality between two vectors $a_i\leq b_i,\forall i$.
\section{Preliminaries and Problem Setting}\label{sec:pre}
\subsection{Gaussian Process Models}
Let~$(\Omega, \mathcal{F},P)$ be a probability space with the sample space~$\Omega$, the corresponding~$\sigma$-algebra~$\mathcal{F}$ and the probability measure~$P$. The index set is given by~$\X \subseteq \R^{n_x}$ with positive integer~$n_x$. Then, a function~$f(\x, \omega)$, which is a measurable function of~$\omega\in\Omega$ with~$\x\in\X$, is called a stochastic process and is simply denoted by~$f(\x)$. A GP is such a process which is fully described by a mean function~$m\colon\X\subseteq\R^{n_x}\to\R$ and a covariance function~$k\colon\Phi\times\X\times \X\to\R$ such that
\begin{align}
	&f(\x) \sim \GP(m(\x),k(\bm\varphi,\x,\x^\prime))
\end{align}
with the hyperparameter vector $\bm\varphi\in\Phi\subseteq\R^{l},\x,\x^\prime\in\X$. The mean function is usually defined to be zero, see~\cite{rasmussen2006gaussian}. The covariance function is a measure for the correlation of two states~$(\x,\x^\prime)$ and depends on hyperparameters $\bm\varphi$ whose number $l\in\N$ depends on the function used. A necessary and sufficient condition for the function $k(\cdot,\cdot,\cdot)$ to be a valid covariance function (denoted by the set $\mathcal{K}$) is that the Gram matrix is positive semidefinite for all possible input values~\cite{shawe2004kernel}. The choice of the covariance function and the determination of the corresponding hyperparameters can be seen as degrees of freedom of the regression. Probably the most widely used covariance function in Gaussian process modeling is the squared exponential (SE) covariance function, see~\cite{rasmussen2006gaussian}. An overview of the properties of different covariance functions can be found in~\cite{bishop2006pattern}.\\
In this paper, we use Gaussian process models with the assumption that the mean functions of the GPs are set to zero. Furthermore, a $n_x$-dimensional input space $\X$ and the output space $\R^{n_y}$ is considered, such that
\begin{align}
		\y=\f(\x)=
	\begin{cases} 
		f_1(\x)\sim \GP(0,k^1(\bm\varphi^1,\x,\x^\prime))\\
		\vdots\hspace{0.9cm}\vdots\hspace{0.5cm}\vdots\\
		f_{n_y}(\x)\sim \GP(0,k^n(\bm\varphi^n,\x,\x^\prime))
	\end{cases}\label{for:GPSSScov}
\end{align}
with $\x\in\X,\y\in\R^{n_y}$. The Gaussian process for each function $f_i$ depends on the covariance function $k^i$ with the set of hyperparameters~$\bm{\varphi}^i\in\Phi^i\subseteq\R^{l^i},l^i\in\N$ for all $i\in\{1,\ldots,n_y\}$. For the prediction, we concatenate~$m$ training inputs~$\{\x^j\}_{j=1}^m$ and training outputs~$\{\y^j\}_{j=1}^m$ in an input matrix~$X=[\x^1,\x^2,\ldots,\x^m]$ and a matrix of outputs~$Y^\top=[\y^1,\y^2,\ldots,\y^m]$ where~$Y_{i,:}$ are corrupted by Gaussian noise with variance $\sigma^2_{i}$. In summary, the training data for the Gaussian processes is described by~$\D=\{X,Y\}$. The joint distribution of the $i$-th component of $\y^*\in\R^{n_y}$ for a new test point $\x^*\in\X$ and the corresponding vector of the training outputs~$Y_{:,i}$ is given by
\begin{align}
	\begin{bmatrix} Y_{:,i} \\ y^*_{i} \end{bmatrix}\sim \mathcal{N} \left(\bm 0, \begin{bmatrix} K^i(\bm{\varphi}^i,X,X) & \bm k^i(\bm{\varphi}^i,\x^*,X)\\ \bm k^i(\bm{\varphi}^i,\x^*,X)^\top & k^i(\bm{\varphi}^i,\x^*,\x^*) \end{bmatrix}\right),\label{for:joint}
\end{align} 
where~$Y_{:,i}$ is the~$i$-th column of the matrix~$Y$. The function~$K^i\colon\Phi^i\times\X^m\times \X^m\to\R^{m\times m}$ is called the Gram matrix whose elements are $K^i_{j',j}= k^i(\bm{\varphi}^i,X_{:, j'},X_{:, j})+\delta(j,j')\sigma^2_i$ for all $j',j\in\{1,\ldots,m\}$. The delta function $\delta(j,j')=1$ for $j=j'$ and zero, otherwise, such that the variance~$\sigma^2_{i}$ is added to the diagonal of the Gram matrix. The vector-valued covariance function $\bm{k}^i\colon\Phi^i\times\X\times \X^m\to\R^m$, with the elements~$k^i_j = k^i(\bm{\varphi}^i,\x^*,X_{:, j})$ for all $j\in\{1,\ldots,m\}$, expresses the covariance between $\x^*$ and the input training data $X$. A prediction of~$y^*_i$ is derived from the joint distribution, see~\cite{rasmussen2006gaussian} for more details. This conditional probability distribution is Gaussian with the conditional mean
\begin{align}
	\mean_i(\y^*\vert \x^*,\D)&=\bm{k}^i(\bm{\varphi}^i,\x^*,X)^\top {K^i}^{-1}Y_{:,i},  \label{for:meanvalue}
\end{align}
and the predicted variance 
\begin{align}
	\var_i(\y^*\vert \x^*,\D)&=k^i(\bm{\varphi}^i,\x^*,\x^*)-\bm{k}^i(\bm{\varphi}^i,\x^*,X)^\top \notag\\
	& \phantom{{}=}{K^i}^{-1} \bm{k}^i(\bm{\varphi}^i,\x^*,X).\label{for:varvalue}
\end{align}
Based on~\cref{for:meanvalue} and~\cref{for:varvalue}, the $n_y$ normal distributed components~$y^*_i\vert \x^*,\D$ are combined in a multi-variable Gaussian distribution $\y^*\vert(\x^*,\D) \!\sim\! \mathcal{N} (\bm\mean(\cdot),\Var(\cdot))$
\begin{align}
	\bm \mean(\y^*\vert \x^*,\D)&=[\mean_1(\cdot),\ldots,\mean_{n_y}(\cdot)]^\top\label{for:meanvec}\\
	\Var(\y^*\vert \x^*,\D)&=\diag\left[\var_1(\cdot),\ldots,\var_{n_y}(\cdot)\right].\label{for:varvec}
\end{align} 
The hyperparameters~$\bm{\varphi}^i$ can be optimized by means of the likelihood function, thus by maximizing the probability of~$\bm{\varphi}^i = \arg\max_{\bm{\varphi}^i} \log P(Y_{:,i}|X,\bm{\varphi}^i)$ for all $i\in\{1,\ldots,{n_y}\}$.
\subsection{Problem Setting}
We consider two GP models $\GP^1,\GP^2$ following~\eqref{for:GPSSScov} each trained with the same set of data points $\mathcal{D}$. The model~$\GP^1$ is based on unknown covariance functions~$k^1,\ldots,k^{n_y}$ and hyperparameters~$\bm{\varphi}^1,\ldots,\bm{\varphi}^{n_y}$ whereas~$\GP^2$ uses the covariance functions~$\hat{k}^1,\ldots,\hat{k}^{n_y}$ and~$\hat{\bm{\varphi}}^1,\ldots,\hat{\bm{\varphi}}^{n_y}$. The goal is to compute the MSPE between the prediction $\y\in\R^{n_y}$ of~$\GP^1$ and the mean prediction of $\hat{\y}\in\R^{n_y}$ given by $\GP^2$, i.e.
\begin{align}
	\ev{\Verts{\y\vert(\x,\D)-\Mean(\hat{\y}\vert \x,\D)}^2}. \label{for:GPSSSes}
\end{align}
Since the covariance functions of $\GP^1$ are unknown, we derive an upper bound for the MSPE.
\begin{rem}
The reason for using the predicted mean of $\GP^2$ only is that we compare the MSPE with the predicted variance of $\GP^2$ to show that the variance can be misleading.
\end{rem}
In accordance with the no-free-lunch theorem, it is not possible to give error bounds for the MSPE without any assumptions on $k^1,\ldots,k^{n_y}$. Thus, we assume to have knowledge about a possible set of covariance functions $\tilde{\mathcal{K}}$ and a set of ranges for their hyperparameters $\tilde{\Phi}$.
\begin{assum}
\label{ass:pc}
Let $\tilde{\mathcal{K}}$ be a set of $z\in\N$ covariance functions 
\begin{align}
	\tilde{\mathcal{K}}=\{\tilde{k}^1,\ldots,\tilde{k}^{z}\in\mathcal{K}\}\label{for:covfcns}
\end{align}
which are positive and pseudo-concave with respect to their hyperparameters. In addition, let $\tilde\Phi$ be a set of convex sets
\begin{align}
	\tilde{\Phi}=\{\tilde{\Phi}^1,\ldots,\tilde{\Phi}^{z}|\tilde{\Phi}^j\subseteq\R^{l^j},l^j\in\N,j\in\{1,\ldots,z\}\}\label{for:hypset},
\end{align}
such that all elements of $\tilde{\Phi}^j$ are valid hyperparameters for~$\tilde{k}^j$, i.e. $\forall j\in\{1,\ldots,z\}, \tilde{k}^j\colon\tilde{\Phi}^j\times\X\times\X\to\R_{\geq 0}$. Then, there exists a function $\Psi\colon\{1,\ldots,{n_y}\}\to\{1,\ldots,z\}$, such that $k^i=\tilde{k}^{\Psi(i)},\bm{\varphi}^i\in\tilde{\Phi}^{\Psi(i)}$ for all $i\in\{1,\ldots,{n_y}\}$.
\end{assum}
Following this assumption, it is not necessary to know the exact covariance functions of $\GP^1$ but they must be elements of a set of possible covariance functions given by~$\tilde{\mathcal{K}}$. To keep this set as small as possible, statistical hypothesis testing could be used for discarding functions which are too unlikely. Analogously, the exact hyperparameters~$\bm{\varphi}^1,\ldots,\bm{\varphi}^{n_y}$ can be unknown but each of them is in a set of $\tilde{\Phi}^1,\ldots,\tilde{\Phi}^z$. In~\Cref{sec:pc}, we show that many common covariance functions are pseudo-concave and positive such as the squared exponential, the rational quadratic and the polynomial for specific inputs. A visualization of a possible configuration for the sets $\tilde{\mathcal{K}}$ and $\tilde\Phi$ is shown in~\cref{fig:set}.
\subsection{Application scenarios}
\textbf{Identification with GP state space models:} For learning an unknown dynamics, the GP state space model (GP-SSM) is a common choice in control~\cite{frigola2014variational}. Assuming a discrete-time system $\x_{\tau+1}\!=\!\f(\x_\tau)$ with $\x_\tau\!\in\R^{n_x},\f\colon\!R^{n_x}\!\to\! R^{n_x},\tau\!\in\!\N$. Based on the dynamics, a set of data $\D\!=\!\{\x_\tau,\x_{\tau+1}\}_{\tau=1}^m$ is generated. For the GP-SSM, the input space $\X$ is the space of current states $\x_\tau$ and the output space represents the predicted next step ahead states $\hat{\x}_{\tau+1}\in\R^{n_x}$, such that
\begin{align}
		\hat{\x}_{\tau+1}=\Mean(\hat{\x}_{\tau+1}\vert \x_\tau,\D)+\Var(\hat{\x}_{\tau+1}\vert \x_\tau,\D)\bm{\zeta}_\tau
\end{align}
with $\bm{\zeta}_\tau\sim\mathcal{N}(0,I)$. The predicted variance correctly represents the model uncertainty if the reproducing kernel Hilbert space norm~$\Verts{f_i}_{k^i}$ is bounded $\forall i\in\{1,\ldots,n_x\}$. This is not a strong limitation on the application side since universal covariance functions, e.g. the SE function, approximate any continuous function $f_i$ arbitrarily exactly on a closed set $\X$. However, without knowing the exact covariance function and hyperparameters, the predicted model uncertainty may not be correct. Our result~(\cref{thm:1,thm:2}) allows to derive an upper bound for the MPSE between the correct but unknown GP-SSM and an estimated GP-SSM. Consequently, the upper bound also captures the error between the estimated GP-SSM and the original discrete-time system.\\
\textbf{Reinforcement learning:} Following~\cite{engel2005reinforcement}, a Gaussian process model is used for the value process $V\colon\R^{n_x}\to\R$ which connects values and rewards in a reinforcement learning scenario. It includes the assumption that the choice of the covariance function reflects the prior concerning the correlation between the values of states and rewards. The presented~\cref{thm:1} can be used to avoid an eventually underestimated MSPE based on the predicted variance with suboptimal hyperparameters. In this scenario, the set $\tilde{\mathcal{K}}$ contains the selected covariance function $\tilde{k}^1$ only. Thus, an upper bound for the MSPE can be computed without knowing the exact hyperparameters. 
\vspace{-0.05cm}
\section{Mean square prediction error}
\label{sec:MSPE}
In this section, we present the computation of an upper bound for the MSPE between $\GP^1$ and the mean prediction of $\GP^2$ that is given by\footnote{For notational convenience we do not write the arguments $\x$ and $X$}
\vspace{-0.05cm}
\begin{align}
	&\ev{\Verts{\Delta}^2}=\sum_{i=1}^{n_y}  k^i(\bm{\varphi}^i)-2\hat{\bm{k}}^i(\hat{\bm{\varphi}}^i)^\top\hat{K}^{i^{-1}}(\hat{\bm{\varphi}}^i) \bm{k}^i(\bm{\varphi}^i)\notag\\
	&+\hat{\bm{k}}^i(\hat{\bm{\varphi}}^i)^\top\hat{K}^{i^{-1}}(\hat{\bm{\varphi}}^i) K^i(\bm{\varphi}^i)\hat{K}^{i^{-1}}(\hat{\bm{\varphi}}^i)\hat{\bm{k}}^i(\hat{\bm{\varphi}}^i)\label{for:mse0}
\end{align}
with error $\Delta=\y\vert(\x,\D)-\Mean(\hat{\y}\vert \x,\D)$. The covariance vector function $\hat{\bm{k}}$ and the Gram matrix $\hat{K}$ are related to $\GP^2$.
\begin{rem}	
	If the estimated covariance function and its hyperparameters are correct, i.e. $k^i=\hat{k}^i,\bm{\varphi}^i=\hat{\bm{\varphi}}^i$ for all $i$, the mean square error is simplified to
	\begin{align}
		\ev{\Verts{\Delta}^2}=\tr\left(\Var(\hat{\y}\vert \x,\D)\right),\label{for:postvarbound}
	\end{align}
	which is the trace of the posterior variance matrix.
\end{rem}
It is obvious, that the true covariance functions $k^i$ are needed to compute this error. To overcome this issue, we derive an upper bound based on a set of covariance functions and hyperparameters. For determining this bound, the maximum of~\eqref{for:mse0} has to be computed without knowing the covariance function $k^i$ and the corresponding hyperparameters $\bm{\varphi}^i$. With~\cref{ass:pc}, this problem is a non-convex, mixed-integer optimization problem. For simplicity in notation in the following derivations, parts of~\cref{for:mse0} are renamed as
\begin{align}
	\alpha^i(\x)&={k^i(\bm{\varphi}^i)}\label{for:maxprob1}\\
	\beta^i(\x)&={\hat{\bm{k}}^i(\hat{\bm{\varphi}}^i)^\top\hat{K}^{i^{-1}} \bm{k}^i(\bm{\varphi}^i)}\label{for:maxprob2}\\
	\gamma^i(\x)&={\hat{\bm{k}}^i(\hat{\bm{\varphi}}^i)^\top\hat{K}^{i^{-1}} K^i(\bm{\varphi}^i)\hat{K}^{i^{-1}}\hat{\bm{k}}^i(\hat{\bm{\varphi}}^i)}\label{for:maxprob3}
\end{align}
with $\alpha^i,\beta^i,\gamma^i\colon\X\to\R$.
\begin{lem}
	\label{lem:kmax}
	For any $k^i\in\tilde{\mathcal{K}}$, the inequality
	\begin{align}
	k^i(\bm{\varphi}^i,\x,\x^\prime)\leq\max_{j\in\{1,\ldots,z\}}\max_{\tilde{\bm{\varphi}}^j\in\tilde{\Phi}^j}{\tilde{k}^j(\tilde{\bm{\varphi}}^j,\x,\x^\prime)}\label{for:sol_kmax}
	\end{align}
	holds for~$\bm{\varphi}^i\in\tilde{\Phi}^{\Psi(i)},\forall i\in\{1,\ldots,n_y\}$ and~$\forall\x,\x^\prime\in\X$.
\end{lem}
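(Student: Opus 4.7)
The plan is to invoke \cref{ass:pc} directly: the assumption guarantees the existence of the index map $\Psi$ that identifies every true covariance function $k^i$ with one of the candidate functions in $\tilde{\mathcal{K}}$, and places the true hyperparameter vector $\bm{\varphi}^i$ inside the corresponding admissible set. The inequality then reduces to the trivial observation that a particular value of a function is upper bounded by its maximum, provided the point lies in the domain over which the maximum is taken.

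Concretely, I would proceed in three short steps. First, by \cref{ass:pc}, pick the index $j^\star = \Psi(i) \in \{1,\ldots,z\}$ so that $k^i(\bm{\varphi}^i,\x,\x^\prime) = \tilde{k}^{j^\star}(\bm{\varphi}^i,\x,\x^\prime)$ with $\bm{\varphi}^i \in \tilde{\Phi}^{j^\star}$. Second, since $\bm{\varphi}^i$ is a feasible point of the inner maximization over $\tilde{\Phi}^{j^\star}$, one obtains
\begin{align}
\tilde{k}^{j^\star}(\bm{\varphi}^i,\x,\x^\prime) \;\leq\; \max_{\tilde{\bm{\varphi}}^{j^\star}\in\tilde{\Phi}^{j^\star}} \tilde{k}^{j^\star}(\tilde{\bm{\varphi}}^{j^\star},\x,\x^\prime).
\end{align}
Third, since $j^\star$ is a particular element of $\{1,\ldots,z\}$, the right-hand side is in turn bounded by the outer maximization over $j$, which yields exactly~\eqref{for:sol_kmax}. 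Chaining these two inequalities through the equality from step one completes the proof.

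There is essentially no hard part here; the statement is a direct unpacking of \cref{ass:pc}. The only subtlety worth a sentence is well-posedness of the inner maximum: positivity and pseudo-concavity of $\tilde{k}^j$ together with convexity of $\tilde{\Phi}^j$ ensure that the supremum is attained on the closure of $\tilde{\Phi}^j$ (and thus can be written as a $\max$) whenever the set is compact, which is the intended use case. If $\tilde{\Phi}^j$ is unbounded, one can still read the $\max$ as $\sup$ without affecting the inequality, because the first bounding step uses only the feasibility of $\bm{\varphi}^i$, not attainment. The lemma then serves as the pointwise cornerstone that upstream results (\cref{thm:1,thm:2}) will exploit to bound the terms $\alpha^i$, $\beta^i$, and $\gamma^i$ defined in~\eqref{for:maxprob1}--\eqref{for:maxprob3} uniformly in the unknown $k^i$ and $\bm{\varphi}^i$.
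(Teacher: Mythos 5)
Your proposal is correct and follows essentially the same argument as the paper: identify $k^i$ with $\tilde{k}^{\Psi(i)}$ via \cref{ass:pc} and bound the value at the feasible point $\bm{\varphi}^i$ by the inner maximum, then by the outer maximum over the finite index set. Your added remark on attainment of the maximum (compactness versus reading it as a supremum) is a minor refinement the paper omits but does not change the substance.
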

\begin{proof}
	Since $k^i$ is an element of $\tilde{\mathcal{K}}$, the maximization over all covariance functions $\tilde{k}^j$ with their hyperparameter sets~$\tilde{\Phi}^j$ must be an upper bound for $k^i$. The optimization problem can be separated in an outer maximization over the finite number of covariance functions $\tilde{k}^j$ and an inner maximization over the convex hyperparameter sets.
\end{proof}
\begin{figure}[t]
\begin{center}
\vspace{0.15cm}
	\begin{tikzpicture}[auto,>=latex',
    block/.style={
      rectangle,
      draw=black,
      text width=4.5em,
      font=\footnotesize,
      align=center,
      rounded corners
    }
]
\draw (-0.5,0) ellipse (1.2cm and 1.3cm);
\node[text width=1cm, align=center, text height=1.5ex, text depth=.25ex] (t) at (-0.5,1.5) {$\tilde{\mathcal{K}}$};
\node[draw=none, align=left,anchor=south west,inner sep=1pt] (a) at (-1.4,0.5) {\footnotesize $\tilde{k}^1=\text{SE-ARD}$};
\node[draw=none, align=left,anchor=south west,inner sep=1pt] (b) at (-1.4,-0.1) {\footnotesize $\tilde{k}^2=\text{POLY}$};
\node[draw=none, align=left,anchor=south west,inner sep=1pt] (c) at (-1.4,-0.7) {\footnotesize $\tilde{k}^3=\text{RQ}$};
\node[text width=4cm, align=center, text height=1.5ex, text depth=.25ex] at (-0.5,-1.7) {Covariance functions};
\draw (3.3,0) ellipse (2.2cm and 1.3cm);
\node[text width=1cm, align=center, text height=1.5ex, text depth=.25ex] (t1) at (3.3,1.5) {$\tilde\Phi$};
\node[block] (a1) at (3,0.8) {$[1,2]\times[0,1]$};
\node[text width=1.5cm, xshift=1.7cm,yshift=0.05cm] at (3,0.8) {$=\!\tilde{\Phi}^1$};
\node[block] (b1) at (3,0) {$[0,2]$};
\node[text width=1.5cm, xshift=1.7cm,yshift=0.05cm] at (3,0) {$=\!\tilde{\Phi}^2$};
\node[block] (c1) at (3,-0.8) {$[2,3]\times[0,1]$};
\node[text width=1.5cm, xshift=1.7cm,yshift=0.05cm] at (3,-0.8) {$=\!\tilde{\Phi}^3$};
\node[text width=4cm, align=center, text height=1.5ex, text depth=.25ex] at (3.3,-1.7) {Hyperparameter sets};
\draw [->, thick] (a.east) to (a1.west);
\draw [->, thick] (b.east) to (b1.west);
\draw [->, thick] (c.east) to (c1.west);
\end{tikzpicture}
	\vspace{-0.2cm}\caption{Example configuration for~\cref{ass:pc}}\vspace{-0.8cm}
	\label{fig:set}
\end{center}
\end{figure}
\begin{lem}
	\label{lem:maxprob2}
	Under~\cref{ass:pc}, there exists a lower bound $\ubar{\beta}^i(\x)\colon\X\to\R$ for~\cref{for:maxprob2} given by
	\begin{align}
	\ubar{\beta}^i(\x)&=\sum_{p=1}^m \min\left\lbrace h^i_p,0 \right\rbrace \max_{j}\max_{\tilde{\bm{\varphi}}^j}{\tilde{k}^j(\tilde{\bm{\varphi}}^j,\x,X_{:,p})}\phantom{a}  \label{for:sol_maxprob2}\\
	\bm{h}^i&=\hat{\bm{k}}^i(\hat{\bm{\varphi}}^i)^\top\hat{K}^{i^{-1}},\,\bm{h}^i\in\R^m
	\end{align}
	with $j\in\{1,\ldots,z\}$ and $\tilde{\bm{\varphi}}^j\!\in\!\tilde{\Phi}^j,\forall\x\in\X,\forall i\in\{1,\ldots,n_y\}$.
\end{lem}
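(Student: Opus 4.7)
The plan is to expand $\beta^i(\x)$ as a sum indexed by the training points and then bound each summand separately, depending on the sign of the corresponding coefficient $h^i_p$. By definition,
\begin{align}
\beta^i(\x)=\hat{\bm{k}}^i(\hat{\bm{\varphi}}^i)^\top\hat{K}^{i^{-1}}\bm{k}^i(\bm{\varphi}^i)=\sum_{p=1}^m h^i_p\,k^i(\bm{\varphi}^i,\x,X_{:,p}),
\end{align}
where $\bm h^i$ is a fixed row vector that depends only on the data and on the estimated covariance $\hat k^i$, hence does not involve the unknown quantities $k^i$ and $\bm\varphi^i$. Thus the lower bound can be obtained term by term.

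Next I would use the positivity part of \cref{ass:pc}, which guarantees that $k^i(\bm{\varphi}^i,\x,X_{:,p})\geq 0$ for every $\x$ and every training point $X_{:,p}$. For indices $p$ with $h^i_p\geq 0$, the summand is nonnegative and can simply be bounded below by $0$. For indices $p$ with $h^i_p<0$, the summand is nonpositive; to lower bound it I maximize the factor $k^i$ (which flips the inequality because of the negative multiplier $h^i_p$), and here \cref{lem:kmax} applies directly: $k^i(\bm{\varphi}^i,\x,X_{:,p})\leq \max_{j}\max_{\tilde{\bm{\varphi}}^j}\tilde k^j(\tilde{\bm{\varphi}}^j,\x,X_{:,p})$. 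Consequently
\begin{align}
h^i_p\,k^i(\bm{\varphi}^i,\x,X_{:,p})\geq h^i_p \max_{j}\max_{\tilde{\bm{\varphi}}^j}\tilde k^j(\tilde{\bm{\varphi}}^j,\x,X_{:,p})
\end{align}
whenever $h^i_p<0$.

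Finally I would combine the two cases into the compact form appearing in \eqref{for:sol_maxprob2} by noting that the prefactor $\min\{h^i_p,0\}$ returns $h^i_p$ exactly when $h^i_p<0$ and $0$ otherwise, which matches the case analysis above since the ``max over $\tilde k^j$'' term is itself nonnegative. Summing over $p=1,\ldots,m$ then yields $\beta^i(\x)\geq \ubar\beta^i(\x)$ for every $i\in\{1,\ldots,n_y\}$ and every $\x\in\X$, which is the claim.

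There is no real technical obstacle here; the only thing to be careful about is bookkeeping the direction of the inequalities when multiplying by a negative scalar, and checking that invoking \cref{lem:kmax} is legitimate on a per-component basis (which it is, since the lemma holds uniformly in $\x,\x'$). The pseudo-concavity of the $\tilde k^j$ in $\tilde{\bm\varphi}^j$ is not actually used in this lemma; it is only needed later to make the inner maximization over $\tilde\Phi^j$ computationally tractable.
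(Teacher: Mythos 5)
Your proof is correct and follows essentially the same route as the paper: expand $\beta^i(\x)$ componentwise, bound the terms with nonnegative $h^i_p$ below by zero using positivity from \cref{ass:pc}, and bound the terms with negative $h^i_p$ via \cref{lem:kmax}. The only cosmetic difference is that the paper first writes the intermediate bound with a $\max\{h^i_p,0\}\min\tilde k^j$ term before discarding it, whereas you discard the nonnegative summands directly; your remark that pseudo-concavity is not needed here is also accurate.
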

\begin{proof}
	The term~\cref{for:maxprob2} can be lower bounded by
	\begin{align}
		\beta^i(\x)\geq\sum_{p=1}^m &\min\left\lbrace h^i_p,0 \right\rbrace \max \tilde{k}^j(\tilde{\bm{\varphi}}^j,\x,X_{:,p})\notag\\
		+&\max\left\lbrace h^i_p,0 \right\rbrace \min \tilde{k}^j(\tilde{\bm{\varphi}}^j,\x,X_{:,p})\label{for:minproof}\\
		&\text{ s.t. }j\in\{1,\ldots,z\},\tilde{\bm{\varphi}}^j\in\tilde{\Phi}^j,\notag
	\end{align}
	because the negative elements of $\bm{h}$ are multiplied with the maximum value of all covariance functions in $\tilde{\mathcal{K}}$ and vice versa. The minimum of $\tilde{k}^j(\tilde{\bm{\varphi}}^j,\x,X_{:,p})$ is always positive following~\cref{ass:pc}, so that 
	\begin{align}
		\beta^i(\x)\geq\sum_{p=1}^m &\min\left\lbrace h^i_p,0 \right\rbrace \max \tilde{k}^j(\tilde{\bm{\varphi}}^j,\x,X_{:,p})
	\end{align}
	s.t. $j\in\{1,\ldots,z\},\tilde{\bm{\varphi}}^j\in\tilde{\Phi}^j$ holds. With~\cref{lem:kmax}, we obtain the lower bound~\cref{for:sol_maxprob2}.
\end{proof}
\begin{lem}
	\label{lem:maxprob3}
	Under~\cref{ass:pc}, there exists an upper bound $\bar{\gamma}^i(\x)\colon\X\to\R$ for~\cref{for:maxprob3} given by
	\begin{align}
	\bar{\gamma}^i(\x)&\!\!=\!\!\sum_{\mathclap{p,q=1,\ldots,m}}\!\max\!\left\lbrace h^i_p h^i_q,\!0 \right\rbrace\!\max_{j}\max_{\tilde{\bm{\varphi}}^j}{\tilde{k}^j\!(\tilde{\bm{\varphi}}^j\!,\!X_{:,q},\!X_{:,p}\!)}  \label{for:sol_maxprob3}\\
	\bm{h}^i&\!=\!\hat{\bm{k}}^i(\hat{\bm{\varphi}}^i)^\top\hat{K}^{i^{-1}},\,\bm{h}^i\in\R^m
	\end{align}
	with $j\in\{1,\ldots,z\}$ and $\tilde{\bm{\varphi}}^j\!\in\!\tilde{\Phi}^j,\forall\x\in\X,\forall i\in\{1,\ldots,n_y\}$.
\end{lem}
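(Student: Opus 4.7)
The plan is to mimic the structure of the proof of Lemma~\ref{lem:maxprob2}, but now with the quadratic form in $\bm{h}^i$ instead of the linear form. First I would expand $\gamma^i(\x)$ as
\begin{align*}
\gamma^i(\x)=\bm{h}^i\,K^i(\bm{\varphi}^i)\,{\bm{h}^i}^\top=\sum_{p,q=1}^m h^i_p h^i_q\,K^i_{q,p}(\bm{\varphi}^i),
\end{align*}
so that every scalar summand has the form of a sign-carrying coefficient $h^i_p h^i_q$ multiplying a covariance entry whose unknown value we need to control from above.

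Next I would split the double sum according to the sign of $h^i_p h^i_q$. For indices with $h^i_p h^i_q\geq 0$, the worst case is attained when $K^i_{q,p}(\bm{\varphi}^i)$ is as large as possible, and by \cref{lem:kmax} this quantity is bounded above by $\max_{j}\max_{\tilde{\bm{\varphi}}^j}\tilde{k}^j(\tilde{\bm{\varphi}}^j,X_{:,q},X_{:,p})$. For indices with $h^i_p h^i_q<0$, the worst case would call for a lower bound on $K^i_{q,p}(\bm{\varphi}^i)$; here I would invoke the positivity of all covariance functions in $\tilde{\mathcal{K}}$ from \cref{ass:pc}, which gives $K^i_{q,p}(\bm{\varphi}^i)\geq 0$ and hence $h^i_p h^i_q K^i_{q,p}(\bm{\varphi}^i)\leq 0$. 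This lets me replace the full negative contribution by zero, which is exactly what the truncation $\max\{h^i_p h^i_q,0\}$ in~\eqref{for:sol_maxprob3} encodes.

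Combining both cases yields
\begin{align*}
\gamma^i(\x)\leq\sum_{p,q=1}^m\max\{h^i_p h^i_q,0\}\max_{j}\max_{\tilde{\bm{\varphi}}^j}\tilde{k}^j(\tilde{\bm{\varphi}}^j,X_{:,q},X_{:,p}),
\end{align*}
which is precisely $\bar{\gamma}^i(\x)$. The nontrivial inner maximization over the convex sets $\tilde{\Phi}^j$ is deferred because \cref{lem:kmax} already guarantees that such a maximum is well-defined and is an upper bound on the unknown covariance value; the pseudo-concavity from \cref{ass:pc} is needed later to make this inner maximization tractable, but is not required for the bound itself.

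I expect the only conceptual subtlety to be the sign-splitting argument, i.e.\ justifying that using the lower bound $0$ for the covariance on the negative-coefficient pairs and then dropping those pairs entirely is legitimate; this step rests entirely on the positivity part of \cref{ass:pc} and is analogous to the corresponding step in the proof of \cref{lem:maxprob2}. A minor point is that the diagonal of $K^i$ carries the noise variance $\sigma_i^2$, but since $\sigma_i^2\geq 0$ this only strengthens the positivity argument, and it is consistent with the upper bound in~\eqref{for:sol_maxprob3} being stated in terms of $\tilde{k}^j$ alone once the noise is absorbed into the admissible covariance family.
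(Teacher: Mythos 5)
Your proof is correct and takes exactly the route the paper intends: the paper's own proof of this lemma is the single sentence that it is analogous to \cref{lem:maxprob2}, and your expansion of the quadratic form $\sum_{p,q} h^i_p h^i_q K^i_{q,p}$, with the sign-split that drops the negative-coefficient pairs via positivity and bounds the non-negative ones via \cref{lem:kmax}, is precisely that analogy spelled out. Your side remark about the noise variance $\sigma_i^2$ on the diagonal of $K^i$ is a genuine subtlety the paper glosses over (the diagonal pairs have $\left(h^i_p\right)^2\geq 0$, so the extra term $\sigma_i^2\sum_p \left(h^i_p\right)^2$ is positive and is not covered by the stated bound unless, as you note, the noise is absorbed into the admissible covariance family), but this does not change the fact that your argument matches the paper's.
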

\begin{proof}
	It is analogous to the proof of~\cref{lem:maxprob2}.
\end{proof}
\begin{thm}
	\label{thm:1}
	Consider the MSPE between the output~$\y$ of $\GP^1$ and the mean~$\hat{\y}$ of $\GP^2$~\cref{for:GPSSSes}. With~\cref{ass:pc}, there exists an upper bound for the MSPE given by
	\begin{align}
		\ev{\Verts{\Delta}^2}&\leq {n_y}\bar{\alpha}(\x)+\sum_{i=1}^{n_y} \bar{\gamma}^i(\x)-2\ubar{\beta}^i(\x)\label{for:thm1}\\
		\bar{\alpha}(\x)&=\max_{j\in\{1,\ldots,z\}}\max_{\tilde{\bm{\varphi}}^j\in\tilde{\Phi}^j}{\tilde{k}^j(\tilde{\bm{\varphi}}^j,\x,\x)}\label{for:sol_maxprob1}
	\end{align}
	with $\ubar{\beta}^i$ of~\cref{for:sol_maxprob2} and $\bar{\gamma}^i$ of~\cref{for:sol_maxprob3}.
\end{thm}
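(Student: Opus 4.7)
The plan is to start from the closed-form expression for the MSPE given in \cref{for:mse0} and use the abbreviations $\alpha^i,\beta^i,\gamma^i$ from \cref{for:maxprob1,for:maxprob2,for:maxprob3}. This rewrites the MSPE as $\ev{\Verts{\Delta}^2}=\sum_{i=1}^{n_y}\bigl(\alpha^i(\x)-2\beta^i(\x)+\gamma^i(\x)\bigr)$, so obtaining an upper bound reduces to upper bounding $\alpha^i$, upper bounding $\gamma^i$, and lower bounding $\beta^i$ (since $\beta^i$ enters with a negative sign).

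The three lemmas preceding the theorem are tailored exactly for this purpose. First I would invoke \cref{lem:kmax} at $\x^\prime=\x$ to conclude $\alpha^i(\x)=k^i(\bm{\varphi}^i,\x,\x)\leq\bar{\alpha}(\x)$ as defined in \cref{for:sol_maxprob1}; note that the right-hand side is independent of $i$, which is why the factor $n_y$ arises when summing. Second, I would apply \cref{lem:maxprob3} to obtain $\gamma^i(\x)\leq\bar{\gamma}^i(\x)$. Third, I would apply \cref{lem:maxprob2} to obtain $\beta^i(\x)\geq\ubar{\beta}^i(\x)$, and because $\beta^i$ appears with coefficient $-2$, the inequality flips in the correct direction, yielding the $-2\ubar{\beta}^i(\x)$ term.

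Summing these three inequalities over $i\in\{1,\ldots,n_y\}$ then gives exactly \cref{for:thm1}. The only nontrivial aspect is bookkeeping of signs, which is already encoded in the definitions of $\ubar{\beta}^i$ and $\bar{\gamma}^i$ through the $\min\{h^i_p,0\}$ and $\max\{h^i_ph^i_q,0\}$ terms; the lemmas were designed so that the final combination is valid for arbitrary realizations of $k^i\in\tilde{\mathcal{K}}$ and $\bm{\varphi}^i\in\tilde{\Phi}^{\Psi(i)}$ guaranteed by \cref{ass:pc}. There is no genuine obstacle at the theorem level since all the optimization work was done in the lemmas; the main step is simply assembling the pieces and verifying that each lemma is applied with the correct sign.
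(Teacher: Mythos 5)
Your proposal is correct and follows essentially the same route as the paper: decompose \cref{for:mse0} into the terms $\alpha^i$, $\beta^i$, $\gamma^i$, bound each via \cref{lem:kmax,lem:maxprob2,lem:maxprob3} respectively (with the sign flip on the $\beta^i$ term handled by using a lower bound), and note that $\bar{\alpha}$ is independent of $i$ so it contributes $n_y\bar{\alpha}$. Nothing is missing relative to the paper's own argument.
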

\begin{proof}
	The mean square error is upper bounded by the sum of the upper bounds for each term of~\cref{for:mse0}. An upper bound of~\cref{for:maxprob1} with~\cref{ass:pc} can be computed by~\cref{for:sol_maxprob1}	following~\cref{lem:kmax}. The bound~$\bar{\alpha}$ is independent of the training data~$\D$ and thus, independent of~$i$, so that it is summed up by ${n_y}\bar{\alpha}$. With~\cref{lem:maxprob2,lem:maxprob3}, the second and third term is bounded which results in~\cref{for:sol_maxprob1}. 
\end{proof}
\begin{rem}
The minimum of~\cref{for:minproof} is set to zero because the numerical computation would be hard to obtain since $\tilde{k}$ is only pseudo-concave. In this form, the solution of~(\ref{for:thm1}) can be computed by standard optimization algorithms~\cite{higgins1990minimizing}.
\end{rem}
\begin{table*}[!b]
	\caption{Pseudo-concave and componentwise monotonically increasing covariance functions\label{tab:prop}}
	\begin{tabularx}{\textwidth}{p{3cm}p{5cm}p{0.8cm}p{5cm}p{2cm}}
		\toprule
		Covariance  function & Expression $k(\bm{\varphi},\x,\x^\prime)=$ 		& &	Parameters & Domain	\\
		\midrule
		Polynomial & $(\x\x^\prime+\varphi^2)^p$ &\vspace{-0.23cm}\inlineeq{}\label{for:cov1}& $p\in\N,\varphi\in\R_{\geq 0}$ & $\forall\x,\x^\prime\in\R^{n_x}_{\geq 0}$\\
		Rational quadratic & $\varphi_2^2\left( 1+\frac{\Verts{\x-\x^\prime}^2}{2p \varphi_1^2}\right)^{-p}$ &\vspace{-0.23cm}\inlineeq{}\label{for:cov2}& $p\in\N_{>0},\bm{\varphi}\in\R^2_{>0}$&$\forall\x,\x^\prime\in\R^{n_x}$\\
		Squared exponential & $\varphi_{n_x+1}^2\negthickspace\exp\negthickspace\left(\negthickspace-\frac{(\x-\x^\prime)^\top P^{-1} (\x-\x^\prime)}{2}\negthickspace\right)$ &\vspace{-0.23cm}\inlineeq{}\label{for:cov3}& $P=\diag(\varphi_1^2,\ldots,\varphi_{n_x}^2),\bm\varphi\in\R^{n_x+1}_{>0}$ & $\forall\x,\x^\prime\in\R^{n_x}$\\
		Mat\'{e}rn & $\varphi_{2}^2 \frac{2^{1-\nu}}{\Gamma(\nu)}\!\left(\!\frac{\sqrt{2\nu} \Verts{\x-\x^\prime}}{\varphi_1}\!\right)^\nu\!\!\mathfrak{K}_\nu\!\left(\!\frac{\sqrt{2\nu} \Verts{\x-\x^\prime}}{\varphi_1}\!\right)$&\vspace{-0.23cm}\inlineeq{}\label{for:cov4}& $\nu=p+1/2,p\in\{0,1,2\},\bm\varphi\in\R^2_{\geq 0}$ & $\forall\x,\x^\prime\in\R^{n_x}$\\
	    \bottomrule
	\end{tabularx}
\end{table*}
\subsection{Closed form solution}
With additional assumptions, it is possible to provide a closed form solution for~\cref{for:thm1} of~\cref{thm:1}.
\begin{assum}
\label{ass:recset}
Each convex set of hyperparameters~$\tilde{\Phi}^j\in\tilde{\Phi}$ of~\cref{for:hypset} can be described by two vectors $\ubar{\bm \varphi}^j,\bar{\bm \varphi}^j\in\R^{l^j}$
\begin{align}
\tilde{\Phi}^j=\left\lbrace \tilde{\bm{\varphi}}^j\in\R^{l^j}|\ubar{\bm \varphi}^j\preceq\tilde{\bm{\varphi}}^j\preceq\bar{\bm \varphi}^j\right\rbrace,\forall j\in\{1,\ldots,z\}.
\end{align}
\end{assum}
\begin{assum}
\label{ass:inc}
Each covariance function $\tilde{k}^j,j\in\{1,\ldots,z\}$ of~\cref{for:covfcns} is componentwise strictly increasing with respect to its hyperparameters $\tilde{\bm\varphi}^j$, i.e. $\forall \tilde{\varphi}_i^j,\tilde{\upsilon}_i^j$ such that $\tilde{\varphi}_i^j<\tilde{\upsilon}_i^j$ one has~$\tilde{k}^j(\tilde{\bm\varphi}^j,\x,\x^\prime)\!<\!\tilde{k}^j(\tilde{\bm\upsilon}^j,\x,\x^\prime)$ $\forall\x,\x^\prime\in\!\X,\tilde{\bm\varphi}^j,\tilde{\bm\upsilon}^j\in\tilde{\Phi}^j$ for all $i\in\{1,\ldots,l^j\}$.
\end{assum}
\Cref{ass:recset} requires that each of the convex hyperparameter sets $\Phi^j\subseteq\R^{l^j}$ is a~$l^j$-dimensional hyperrectangle which is a weak restriction in practice. In~\Cref{sec:pc}, we show that~\cref{ass:inc} holds for some commonly used covariance functions. Based on these assumptions, there exists a closed form solution of~\cref{thm:1} because the maximum of the covariance function $\tilde{k}^j$ is now always at $\bar{\bm \varphi}^j$, see~\cref{fig:sekernel}. 
\begin{thm}
	\label{thm:2}
	Consider the MSPE between the output~$\y$ of $\GP^1$ and the mean~$\hat{\y}$ of $\GP^2$~\cref{for:GPSSSes}. With~\cref{ass:pc,ass:recset,ass:inc}, there exists an upper bound for the MSPE given by
	\begin{align}
		\ev{\Verts{\Delta}^2}&\leq \sum_{i=1}^{n_y} \max_j \left\lbrace\tilde{k}^j(\bar{\bm\varphi}^j,\x,\x)+\kappa^i(\x)-\eta^i(\x)\right\rbrace\\
		\eta^i(\x)&=2\sum_{p=1}^m \min\left\lbrace h^i_p,0 \right\rbrace  \tilde{k}^j(\bar{\bm{\varphi}}^j,\x,X_{:,p})\notag\\
				&+\max\left\lbrace h^i_p,0 \right\rbrace  \tilde{k}^j(\ubar{\bm{\varphi}}^j,\x,X_{:,p})\notag\\
		\kappa^i(\x)&=\sum_{\mathclap{p,q=1,\ldots,m}} \max\left\lbrace h^i_p h^i_q,0 \right\rbrace \tilde{k}^j(\bar{\bm{\varphi}}^j,X_{:,q},X_{:,p})  \notag\\
		 &+\min\left\lbrace h^i_p h^i_q,0 \right\rbrace \tilde{k}^j(\ubar{\bm{\varphi}}^j,X_{:,q},X_{:,p})\label{for:thm2}
	\end{align}
	with $\bm{h}^i=\hat{\bm{k}}^i(\hat{\bm{\varphi}}^i)^\top\hat{K}^{i^{-1}}$.
\end{thm}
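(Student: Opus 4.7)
The plan is to combine the three per-output bounding arguments used in Theorem~1 (that is, the upper bounds on $\alpha^i$, $\gamma^i$ and the lower bound on $\beta^i$ from \cref{lem:kmax,lem:maxprob2,lem:maxprob3}) but replace the abstract maximizations over $\tilde{\Phi}^j$ with explicit corner evaluations, using \cref{ass:recset,ass:inc}, and then group everything associated with a single output index $i$ under a single outer $\max_j$.

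First I would establish the following enabling fact: under \cref{ass:recset,ass:inc}, componentwise strict monotonicity of $\tilde{k}^j$ in $\tilde{\bm\varphi}^j$ on the hyperrectangle $[\ubar{\bm\varphi}^j,\bar{\bm\varphi}^j]$ implies
\begin{align*}
\max_{\tilde{\bm\varphi}^j\in\tilde{\Phi}^j}\tilde{k}^j(\tilde{\bm\varphi}^j,\x,\x^\prime) &= \tilde{k}^j(\bar{\bm\varphi}^j,\x,\x^\prime),\\
\min_{\tilde{\bm\varphi}^j\in\tilde{\Phi}^j}\tilde{k}^j(\tilde{\bm\varphi}^j,\x,\x^\prime) &= \tilde{k}^j(\ubar{\bm\varphi}^j,\x,\x^\prime),
\end{align*}
for all $\x,\x^\prime\in\X$. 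This turns every maximization over a hyperparameter set in the proofs of \cref{lem:kmax,lem:maxprob2,lem:maxprob3} into a function evaluation at a fixed corner, and — crucially — also yields a computable \emph{minimum} of $\tilde{k}^j$, which was previously discarded by setting it to zero in \cref{lem:maxprob2,lem:maxprob3}. Next, I fix an output index $i$ and, tentatively, a candidate $j\in\{1,\ldots,z\}$ as the value of $\Psi(i)$. Under \cref{ass:pc} we then have $k^i=\tilde{k}^j$ with $\bm{\varphi}^i\in\tilde{\Phi}^j$, so that $\alpha^i(\x)\leq \tilde{k}^j(\bar{\bm\varphi}^j,\x,\x)$. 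For $-2\beta^i(\x)$ I would re-run the sign-splitting argument behind \eqref{for:minproof} but now retain both branches, obtaining $-2\beta^i(\x)\leq -\eta^i(\x)$ with $\eta^i$ as stated. For $\gamma^i(\x)$ the same sign-splitting on the bilinear weights $h^i_p h^i_q$ (using that $\bm h^i{\bm h^i}^\top$ has both positive and negative entries in general) gives $\gamma^i(\x)\leq \kappa^i(\x)$.

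Summing these three bounds yields $\alpha^i-2\beta^i+\gamma^i\leq \tilde{k}^j(\bar{\bm\varphi}^j,\x,\x)+\kappa^i(\x)-\eta^i(\x)$ whenever $j=\Psi(i)$. Since $\Psi(i)$ is unknown but constrained to $\{1,\ldots,z\}$, taking the maximum over $j$ on the right-hand side preserves the inequality while eliminating the unknown; summing over $i=1,\ldots,n_y$ then yields the claim \eqref{for:thm2}. The only place where care is needed is the sign-splitting step for $\gamma^i$: unlike in \cref{lem:maxprob3}, both the $\max$ and the $\min$ branch must be carried through, and one must verify that pairing $\max\{h^i_p h^i_q,0\}$ with $\tilde{k}^j(\bar{\bm\varphi}^j,\cdot,\cdot)$ and $\min\{h^i_p h^i_q,0\}$ with $\tilde{k}^j(\ubar{\bm\varphi}^j,\cdot,\cdot)$ is indeed the right orientation so that the resulting expression is an upper bound on $\gamma^i$. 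I expect this to be the main — though essentially mechanical — obstacle; the rest of the argument is a direct specialization of \cref{thm:1} enabled by the corner property derived from \cref{ass:recset,ass:inc}.
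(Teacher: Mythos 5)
Your proposal is correct and follows essentially the same route as the paper's own proof: fix $j=\Psi(i)$ so that $\tilde{k}^j=k^i$, use \cref{ass:recset,ass:inc} to replace the inner maximizations and minimizations by corner evaluations at $\bar{\bm\varphi}^j$ and $\ubar{\bm\varphi}^j$, apply the sign-splitting of \cref{lem:maxprob2,lem:maxprob3} termwise (now retaining both branches), and absorb the unknown $\Psi(i)$ via the outer $\max_j$. The orientation you flag for $\kappa^i$ and $\eta^i$ is indeed the right one, matching the inequality \cref{for:pthm2} in the paper.
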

\begin{rem}
The solution of~\cref{for:thm2} is a closed form expression in the sense that it can be evaluated in a finite number of operations because the maximization is over a finite set.
\end{rem}
\begin{proof}
	Assume that we choose $j\in\{1,\ldots,z\}$ of each maximization such that $\tilde{k}^j$ of~\cref{for:thm2} is equal to the covariance function $k^i$. With~\cref{ass:inc}, the covariance function $k^i$ with the hyperparameters $\bm{\varphi}^i$ is always equal or less then with $\bar{\bm\varphi}^i$ and vice versa, i.e. $k^i(\bm{\varphi}^i,\x,\x^\prime)\geq k^i(\ubar{\bm\varphi}^i,\x,\x^\prime)$. Thus, we have to prove
	\begin{align}
		\ev{\Verts{\Delta}^2}&\leq  \sum_{i=1}^{n_y} \big\{ k^i(\bar{\bm\varphi}^i,\x,\x)\notag\\
		&+ \sum_{\mathclap{\hspace{1cm}p,q=1,\ldots,m}} \max\left\lbrace h^i_p h^i_q,0 \right\rbrace k^i(\bar{\bm{\varphi}}^i,X_{:,q},X_{:,p})  \notag\\
		 &+\min\left\lbrace h^i_p h^i_q,0 \right\rbrace k^i(\ubar{\bm{\varphi}}^i,X_{:,q},X_{:,p})\notag\\
		 &-2\sum_{p=1}^m \min\left\lbrace h^i_p,0 \right\rbrace  k^i(\bar{\bm{\varphi}}^i,\x,X_{:,p})\notag\\
		&+\max\left\lbrace h^i_p,0 \right\rbrace  k^i(\ubar{\bm{\varphi}}^i,\x,X_{:,p})\big\},\label{for:pthm2}
	\end{align}
	 where each term of~\cref{for:pthm2} upper bounds the corresponding term of $\ev{\Verts{\Delta}^2}$ in~\cref{for:mse0} analogous to the idea in the proof of~\cref{lem:maxprob2}. Since~\cref{for:thm2} maximizes over all $\tilde{k}^j$ and, considering~\cref{ass:pc}, the covariance function $k^i$ is element of $\tilde{\mathcal{K}}$, there exists a $j$ such that the assumption at the beginning of the proof is fulfilled. \cref{for:cov1}
\end{proof}
\begin{cor}
\label{cor:1}
If $k^1=\cdots=k^{n_y}$ and $\bm{\varphi}^1=\ldots=\bm{\varphi}^{n_y}$, the closed form solution~\cref{for:thm2} of~\cref{thm:2} is equivalent to the posterior variance given by~\cref{for:postvarbound} for $\tilde{\mathcal{K}}=\{k^1\}$ and $\tilde\Phi=\{\bm{\varphi}^1\}$ 
\end{cor}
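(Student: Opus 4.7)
The plan is to substitute the degenerate sets directly into the closed-form bound of \cref{thm:2} and verify that every max/min splitting collapses, leaving exactly the posterior variance. Since $\tilde{\mathcal{K}}=\{k^1\}$, the outer maximization over $j$ trivializes to a single term, and since $\tilde\Phi=\{\tilde\Phi^1\}$ with $\tilde\Phi^1=\{\bm\varphi^1\}$ a single point, the hyperrectangle of \cref{ass:recset} degenerates to $\ubar{\bm\varphi}^1=\bar{\bm\varphi}^1=\bm\varphi^1$. Consequently every evaluation $\tilde k^j(\bar{\bm\varphi}^j,\cdot,\cdot)$ and $\tilde k^j(\ubar{\bm\varphi}^j,\cdot,\cdot)$ appearing in \cref{thm:2} reduces to the common quantity $k^1(\bm\varphi^1,\cdot,\cdot)$. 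In parallel, because $k^i=k^1$ and $\bm\varphi^i=\bm\varphi^1$ for all $i$, and because $\GP^2$ is configured to match these, we have $\hat K^i=K^1$ and $\hat{\bm k}^i=\bm k^1$, so the row vector $\bm h^i=\bm k^1(\bm\varphi^1,\x,X)^\top K^{1^{-1}}$ is independent of $i$.

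Next I would exploit the scalar identity $\min\{a,0\}+\max\{a,0\}=a$ to fuse the sign-separated pieces of $\eta^i$ and $\kappa^i$. Because the upper and lower kernel evaluations now coincide, the two summands inside $\eta^i$ combine into $2h^i_p\,k^1(\bm\varphi^1,\x,X_{:,p})$, giving $\eta^i(\x)=2\bm k^1(\x,X)^\top K^{1^{-1}}\bm k^1(\x,X)$. An analogous fusion inside $\kappa^i$ produces $\sum_{p,q} h^i_p h^i_q\,k^1(\bm\varphi^1,X_{:,q},X_{:,p}) = \bm h^i K^1 (\bm h^i)^\top = \bm k^1(\x,X)^\top K^{1^{-1}}\bm k^1(\x,X)$, where the final equality uses the cancellation $K^{1^{-1}} K^1 K^{1^{-1}} = K^{1^{-1}}$.

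Plugging these into the bound of \cref{thm:2} and using the triviality of the outer $j$-maximum, the per-dimension summand becomes $k^1(\bm\varphi^1,\x,\x)-\bm k^1(\x,X)^\top K^{1^{-1}}\bm k^1(\x,X)$, which is precisely $\var_i(\y^*\vert\x,\D)$ in \cref{for:varvalue}. Summing over $i=1,\ldots,n_y$ recovers $\tr\bigl(\Var(\hat\y\vert\x,\D)\bigr)$, matching \cref{for:postvarbound}.

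The main bookkeeping subtlety is the noise variance: for the cancellation $K^{1^{-1}} K^1 K^{1^{-1}} = K^{1^{-1}}$ to be clean, the sum $\sum_{p,q} h^i_p h^i_q\,k^1(X_{:,q},X_{:,p})$ must reproduce $\bm h^i K^1 (\bm h^i)^\top$ with the $\sigma_1^2$ on the diagonal of $K^1$ included. This is consistent with how $K^i$ is assembled in \cref{for:joint} and with \cref{lem:maxprob3}, so I would simply state the convention and proceed; the only risk is losing a stray $\sigma_1^2\sum_p (h^i_p)^2$ term if the diagonal noise contribution is inadvertently dropped.
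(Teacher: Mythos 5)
Your proof is correct and takes the same route as the paper: substitute the singleton sets so that $\ubar{\bm\varphi}=\bar{\bm\varphi}=\bm\varphi^1$, collapse the $\min/\max$ splittings via $\min\{a,0\}+\max\{a,0\}=a$, and recover \cref{for:postvarbound}. The paper's own proof is a one-line appeal to \cref{for:pthm2}; you simply carry out the computation it leaves implicit, and your two explicit caveats (that $\GP^2$ must be configured with the same covariance function and hyperparameters for the result to be the posterior variance of $\GP^2$, and that the diagonal noise term $\sigma_1^2$ must be counted inside $K^1$ for the cancellation $K^{1^{-1}}K^1K^{1^{-1}}=K^{1^{-1}}$) are exactly the conventions the paper tacitly assumes.
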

\begin{proof}
This is a result of~\cref{for:pthm2} if the set $\tilde{\mathcal{K}}$ only contains the covariance functions~$k^1=\cdots=k^{n_y}$ and the set $\tilde\Phi$ only the corresponding hyperparameters~$\bm{\varphi}^1=\ldots=\bm{\varphi}^{n_y}$.
\end{proof}
\begin{rem}
\Cref{cor:1} shows the convergence of the upper bound to the true MSPE~\cref{for:GPSSSes} between $\GP^1$ and $\GP^2$ for the minimum-size sets $\tilde{\mathcal{K}},\tilde\Phi$.
\end{rem}
\subsection{Pseudo-concave covariance functions}
In the following, we show that many common covariance functions fulfill~\cref{ass:pc,ass:inc}.
\label{sec:pc}
\begin{prop}
\label{prop:cf}
	The covariance functions \cref{for:cov1,for:cov2,for:cov3,for:cov4} with the corresponding parameters are pseudo-concave and componentwise monotonically increasing with respect to their hyperparameters on the designated domain.
\end{prop}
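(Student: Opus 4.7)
The plan is to handle each of the four covariance functions \cref{for:cov1,for:cov2,for:cov3,for:cov4} separately, and for each one verify the two properties of the proposition independently: (a) componentwise strict monotonicity in the hyperparameters, which I would check by computing the partial derivative with respect to each $\varphi_i$ and showing positivity on the designated domain; and (b) pseudo-concavity in the hyperparameter vector, which I would establish through the stronger property of log-concavity. Recall that a positive, differentiable function on an open convex set is pseudo-concave whenever its logarithm is concave, so the task reduces to verifying concavity of $\log \tilde{k}^{j}(\cdot,\x,\x')$ in $\bm\varphi$ for each fixed $(\x,\x')$.

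For the polynomial \cref{for:cov1}, there is only one hyperparameter, $\varphi$, and the derivative is $2p\varphi(\x\x'+\varphi^2)^{p-1}$, which is strictly positive on $\varphi>0$ given $\x\x'\geq 0$; pseudo-concavity of a strictly increasing scalar function is immediate. For the rational quadratic \cref{for:cov2} and the squared exponential \cref{for:cov3}, the logarithm is separable across the hyperparameters, so joint concavity reduces to one-dimensional concavity of each summand. Concretely, $2\log\varphi_{n_x+1}$ is concave, each term $-\tfrac{1}{2}(x_i-x'_i)^2/\varphi_i^2$ has second derivative $-3(x_i-x'_i)^2/\varphi_i^4\leq 0$, and the RQ summand $-p\log(1+c/\varphi_1^2)$ with $c=\|\x-\x'\|^2/(2p)\geq 0$ can be shown concave by direct differentiation. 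Monotonicity is then just a sign check on the corresponding first derivatives.

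For the Matérn function \cref{for:cov4}, I would substitute the half-integer values $\nu=p+1/2$ for $p\in\{0,1,2\}$ so that the Bessel term reduces to closed-form polynomial-times-exponential expressions, namely $\exp(-r)$, $(1+r)\exp(-r)$, and $(1+r+r^2/3)\exp(-r)$ with $r=\sqrt{2\nu}\|\x-\x'\|/\varphi_1$. Taking logs, each case gives $2\log\varphi_2$ (concave) plus a term of the form $\log P(r)-r$ with $P$ a nonnegative polynomial and $r=c/\varphi_1$, $c\geq 0$. Concavity in $\varphi_1$ then follows from a direct second-derivative computation; for instance, for $p=1$ one obtains $g(\varphi_1)=\log(1+c/\varphi_1)-c/\varphi_1$ with $g''(\varphi_1)=-c^2(3\varphi_1+2c)/[\varphi_1^3(\varphi_1+c)^2]<0$, and the $p=2$ case is analogous but algebraically heavier. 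The componentwise monotonicity in $\varphi_1$ follows because $g'(\varphi_1)>0$ on $\varphi_1>0$, and monotonicity in $\varphi_2$ is obvious from the $\varphi_2^2$ prefactor.

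The main obstacle is the Matérn case with $p\in\{1,2\}$, because unlike RQ and SE the log is not separable in simple reciprocal powers of $\varphi_1$ but involves $\log(\text{polynomial in }1/\varphi_1)$. A convenient reparameterization such as $u=1/\varphi_1$ does not preserve concavity, so the second-derivative computation has to be carried out in the original variable; the bookkeeping is finite and mechanical, but this is the only step where care is required. Once this is handled for each $p$, combining the log-concavity with the positivity assumption on the domain yields pseudo-concavity, and the partial-derivative signs yield the componentwise monotonicity, completing the proposition.
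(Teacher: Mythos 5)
Your proof is correct, but it takes a genuinely different route from the paper. The paper establishes quasi-concavity case by case --- via the bordered-Hessian determinant test ($\det H_2<0$, $\det H_3>0$) for the rational quadratic and Mat\'ern functions, and via a composition argument for the squared exponential --- and then upgrades to pseudo-concavity using the theorem that a $\C^1$ quasi-concave function with nonvanishing gradient is pseudo-concave. You instead prove the stronger property of log-concavity uniformly: since $\nabla k = k\,\nabla\log k$ and $k>0$, the gradient inequality for the concave $\log k$ transfers directly to the pseudo-concavity condition for $k$, with no nonvanishing-gradient hypothesis needed (a hypothesis that, strictly speaking, fails for the stationary kernels at $\x=\x^\prime$, a boundary case both proofs gloss over, as does the paper's polynomial case at $\varphi=0$ where the derivative vanishes). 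Your route is more unified and elementary --- every kernel in the table has a log that is separable across hyperparameters, so joint concavity reduces to scalar second-derivative checks --- whereas the paper's route requires $3\times 3$ determinant computations that it only sketches for the Mat\'ern ("analogous to the rational quadratic", with what appears to be a typo writing $\bar H_2$ twice). Your explicit $p=1$ Mat\'ern computation is correct ($g''(\varphi_1)=-c^2(3\varphi_1+2c)/[\varphi_1^3(\varphi_1+c)^2]$ checks out), and the $p=2$ case you defer does go through: writing $h(r)=\log(1+r+r^2/3)-r$ one finds $h'(r)=-r(1+r)/(3+3r+r^2)\le 0$ and $h''(r)=-(3+6r+2r^2)/(3+3r+r^2)^2<0$, so $h$ is concave and nonincreasing, and composing with the convex map $r=c/\varphi_1$ yields concavity in $\varphi_1$ by the standard composition rule --- this also spares you the "algebraically heavier" direct differentiation you were worried about. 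The monotonicity checks via first-derivative signs match the paper's conclusions in all four cases.
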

\begin{proof}
The following proof considers each covariance function separately.\\
\textbf{Polynomial:} The polynomial function $k$ is strictly increasing on $\varphi\in\R_{\geq 0}$ for any $\x,\x^\prime\in\R^{n_x}_{\geq 0}$ and hence, pseudo-concave~\cite{bazaraa2013nonlinear} and componentwise monotonically increasing.\\
\textbf{Rational quadratic:} The covariance function is quasi-concave if $\det H_3(\bm{\varphi})>0$ and $\det H_2(\bm{\varphi})<0$, where the matrix $H_r$ is the $r$-th order leading principal submatrix of the bordered Hessian of $k$ in respect to~$\bm\varphi$, see~\cite{bazaraa2013nonlinear}. The principal submatrices are given by
	\begin{align}
		H_2&=\frac{-4d^2p^2\varphi_2^4}{\varphi_1^2(2p\varphi_1^2+d)^2\left(\frac{2p\varphi_1^2+d}{2p\varphi_1^2}\right)^{2p}}<0,\\
		H_3&=\frac{8d\varphi_2^4p(dp+d+6p\varphi_1^2)}{\varphi_1^2(2p\varphi_1^2+d)^2\left(\frac{2p\varphi_1^2+d}{2p\varphi_1^2}\right)^{3p}}>0,
	\end{align}
	with $d\!=\!\Verts{\x-\x^\prime}^2\!>\!0,\forall p\in\N_{>0},\bm{\varphi}\in\R^2_{>0}$ so that the function is quasi-concave. Since $k\!\in\!\C^1$ and $\partial k\!/\!\partial \bm{\varphi}\!\neq\!\bm{0}$ on its domain, the function is also pseudo-concave~\cite{bazaraa2013nonlinear}. It is obviously also componentwise monotonically increasing.\\
\textbf{Squared exponential:} The covariance function can be rewritten as 
	\begin{align}
		k(\bm{\varphi},\x,\x^\prime)&=\exp\left(\log(\varphi_{n_x+1}^2)+\sum_{i=1}^{n_x} -\frac{\vert x_i-x_i^\prime\vert^2}{2\varphi_i^2}\right),
	\end{align}
	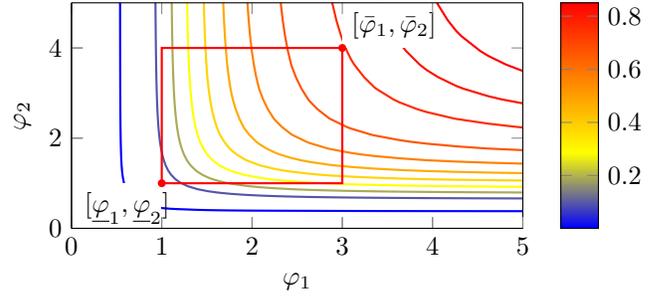
\begin{figure}
\begin{center}
\vspace{0.2cm}
	\begin{tikzpicture}
    \begin{axis}[	height=3cm,
    				width=6cm,
    				view={0}{90}, 
    				xlabel={$\varphi_1$},
    				ylabel={$\varphi_2$},
    				xmin=0,xmax=5,ymin=0,ymax=5,
    				scale only axis,
    				colorbar]
    \addplot3[thick,mesh/rows=50,mesh/num points=2500, contour gnuplot={
    			levels={0.0010,0.0953,0.1897,0.2840,0.3783,0.4727,0.5670,0.6613,0.7557,0.81,0.8500},
    			output point meta=rawz,labels=false}] table [x index=0, y index=1, z index=2] {data/figure3_mesh.dat};
    \node[fill=white,text width=0.9cm, anchor=north east] at (1,1) {$[\underline{\varphi}_1,\underline{\varphi}_2]$};
    \node[fill=white,text width=0.9cm, anchor=south west] at (3,4) {$[\bar{\varphi}_1,\bar{\varphi}_2]$};
    \node[fill=red,circle,inner sep=0pt,minimum size=3pt] at (1,1) {};
    \node[fill=red,circle,inner sep=0pt,minimum size=3pt] at (3,4) {};
    \addplot[red,thick] coordinates {(1,1) (1,4) (3,4) (3,1) (1,1)};
    \end{axis}
\end{tikzpicture} 
	\vspace{-0.8cm}\caption{The SE function $k(\bm\varphi,\x,\x)$ over its hyperparameters $\bm\varphi$. With \cref{ass:recset,ass:inc}, the maximum is at the corner of the hyperrectangle~$\bar{\bm \varphi}$.}\vspace{-0.8cm}
	\label{fig:sekernel}
\end{center}
\end{figure}where the argument of the exponential functions is quasi-concave, since this sum of concave functions is concave on all $\bm{\varphi}\in\R^{n_x+1}_{>0}$ for any $\x,\x^\prime\in\R^{n_x}$. The composition with the strictly increasing exponential function results in an overall quasi-concave function~\cite[Theorem 8.5]{sundaram1996first}. Since $k$ is continuous and $\partial k/\partial \bm{\varphi}\neq\bm{0}$ on its domain, the function is also pseudo-concave. Since the exponential and the logarithm function are monotonically increasing, the covariance function is componentwise monotonically increasing.\\
\textbf{Mat\'{e}rn:} For $p\in\N_{>0},\nu\!=\!p+1/2$, the function can be simplified to
    \begin{align}
    	k_{\bm{\varphi}}(d)\negthickspace=\negthickspace\varphi_{2}^2\exp\negthickspace\left(\negthickspace-\frac{\sqrt{2\nu}d}{\varphi_1}\negthickspace\right)\negthickspace \frac{p!}{(2p)!}\negthickspace\sum_{i=0}^p\negthickspace \frac{(p+i)!}{i!(p-i)!}\negthickspace{\left(\negthickspace\frac{\sqrt{8\nu}d}{\varphi_1}\negthickspace\right)\negthickspace.}^{\negthickspace p-i}
    \end{align}
   Analogous to the rational quadratic covariance, for the principal submatrices, it holds $\det\bar{H}_2<0$ and $\det\bar{H}_2>0$. With $k\in\C^1$ and $\partial k/\partial \bm{\varphi}\neq\bm{0}$ on its domain, the function is pseudo-concave. Since the exponential function grows faster than the polynomial, the covariance function is also componentwise monotonically increasing.
\end{proof}
\section{Simulation}\label{sec:sim}
In this section, we present a numerical example for the result of~\cref{thm:2} with GP-SSMs. For this purpose, we assume that a discrete-time, one-dimensional system can be correctly modeled by $\GP^1$ with Mat\'{e}rn covariance function where $p=1$ and the hyperparameters $\bm{\varphi}^1=[5.2,1.6]^\top$. The training set contains 10 uniformly distributed measurements.\\
Since the correct covariance function is usually unknown in real-world applications, the squared exponential (SE) covariance function is often used to learn the system dynamics. Following that approach, $\GP^2$ with SE covariance function is trained with the measurements of the system. 
The hyperparameters are optimized according to the likelihood function with a conjugate gradient method which results in the hyperparameters $\hat{\bm\varphi}^1=[0.36,0.32]^\top$.\\
In~\cref{fig:sim_mean}, the estimated mean~$\Mean(\hat{x}_{\tau+1}\vert x_\tau,\D)$ together with mean and variance of the true generating process are shown. It is obvious that the mean does not correspond to the true process, although it represents the training data effectively. 
As consequence, the mean square error between the estimated mean and the correct model is radically underestimated in the state space and in the time domain as presented in~\cref{fig:sim_mse}. To overcome this issue, we use~\cref{thm:2} to compute an upper bound of the MSPE without exact knowledge of the correct covariance function. For this purpose, we consider a set of covariance functions with their corresponding hyperparameter sets shown in~\Cref{tab:comp}. For comparison of different interval ranges, we use three different interval sizes around the true hyperparameters.~\Cref{fig:sim_mse} shows the estimated and true mean square prediction error which is normally unknown.
\begin{table}
	\caption{Covariance functions in $\tilde{\mathcal{K}}$ with hyperparameter sets in $\tilde\Phi$.\label{tab:comp}\vspace{-0.2cm}}
	\begin{tabularx}{\columnwidth}{p{4cm}p{5cm}}
		\toprule
		Covariance  functions		&	Hyperparameter sets	\\
		\midrule
		$\tilde{k}^1$: Mat\'{e}rn 	$p=1$  &	$\{0.9\bm{\varphi}^1\preceq\tilde{\bm{\varphi}}^1\preceq 1.1\bm{\varphi}^1\}$\\ 
		 &	$\{[0;0]\prec\tilde{\bm{\varphi}}^1\preceq 2\bm{\varphi}^1\}$\\ 
		&	$\{[0;0]\prec\tilde{\bm{\varphi}}^1\preceq 3\bm{\varphi}^1\}$\\ 
		$\tilde{k}^2$: Mat\'{e}rn 	$p=0,2$ &	$\{[1;1.5]\preceq\tilde{\bm{\varphi}}^2\preceq[10;2]\}$\\ 
		$\tilde{k}^3$: Rational quadratic 	$p=1$ &	$\{[1;0.1]\preceq\tilde{\bm{\varphi}}^3\preceq[20;1]\}$\rule{0pt}{2ex}\\
		$\tilde{k}^4$: Squared exponential  &	$\{[0.1;0.01]\preceq\tilde{\bm{\varphi}}^4\preceq[10;1]\}$\rule{0pt}{2ex}\\
	    \bottomrule
	\end{tabularx}
	\vspace{-0.2cm}
\end{table}	
The estimated error obviously underestimates the true MPSE. In contrast, the derived upper bound given by~\cref{thm:2} based on the functions of~\Cref{tab:comp} successfully confines the true MSPE. With a wider range of the interval the bound becomes loser.
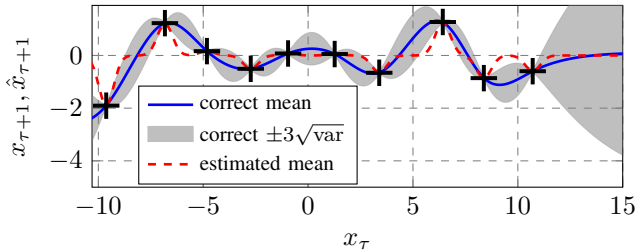
\begin{figure}[t]
	\begin{center}
		\begin{tikzpicture}
\begin{axis}[
  xlabel={$x_\tau$},
  ylabel={$x_{\tau+1},\hat{x}_{\tau+1}$},
  grid style={dashed,gray},
  grid = both,
       width=\columnwidth,
  height=4cm,
  ymin=-5,
  ymax=1.9,
  xmin=-10.3,
  xmax=15,
  legend style={font=\footnotesize},
  legend style={at={(axis cs:-8.1,-4.9)},anchor=south west},
  legend cell align={left}]
\addplot+[name path=varp1, color=gray,opacity=0.3, no marks] table [x index=0,y expr=\thisrowno{1}+\thisrowno{2}]{data/figure4_model_GP.dat};
\addplot+[name path=varm1, color=gray,opacity=0.3, no marks] table [x index=0,y expr=\thisrowno{1}-\thisrowno{2}]{data/figure4_model_GP.dat};
\addplot[color=blue,line width=1pt] table [x index=0,y index=1]{data/figure4_model_GP.dat};
\addplot[gray,opacity=0.5] fill between[ of = varm1 and varp1]; 
\addplot[dashed,color=red,line width=1pt] table [x index=0,y index=1]{data/figure4_est_mean_tru_samples.dat};
\addplot[only marks,color=black,line width=1.5pt,mark=+,mark size=5] table [x index=0,y index=1]{data/figure4_tdata.dat};
\legend{,,correct mean,correct $\pm3\sqrt{\var}$,estimated mean};
\end{axis}
\end{tikzpicture} 
		\vspace{-0.7cm}\caption{Based on the training data, the estimated mean generates a misleading impression of the underlying process.}				\vspace{-0.8cm}
		\label{fig:sim_mean}
	\end{center}
\end{figure}
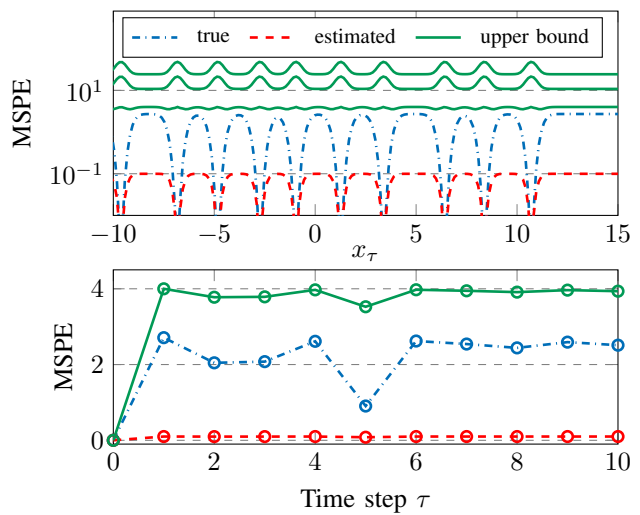
\begin{figure}[t]
	\begin{center}
	\vspace{0.13cm}
		\begin{tikzpicture}
\begin{semilogyaxis}[
name=plot1,
legend columns=3, 
legend style={column sep=5pt},
  xlabel={$x_\tau$},
  x label style={at={(axis description cs:0.5,-0.1)},anchor=north},
  ylabel={MSPE},
  legend pos=north east,
  grid style={dashed,gray},
  ymajorgrids,
       width=0.96\columnwidth,
  height=4.3cm,
  ymin=0.01,
  ymax=800,
  xmin=-10,
  xmax=15,
  legend style={font=\footnotesize},
  legend cell align={left}]
\addplot[color=NavyBlue,dash dot,line width=1pt] table [x index=0,y index=1]{data/figure5_mse.dat};
\addplot[dashed,color=red,line width=1pt] table [x index=0,y index=2]{data/figure5_mse.dat};
\addplot[color=ForestGreen,line width=1pt] table [x index=0,y index=3]{data/figure5_mse.dat};
\addplot[color=ForestGreen,line width=1pt] table [x index=0,y index=4]{data/figure5_mse.dat};
\addplot[color=ForestGreen,line width=1pt] table [x index=0,y index=5]{data/figure5_mse.dat};
\legend{true, estimated, upper bound};
\end{semilogyaxis}
\begin{axis}[
name=plot2,
    at=(plot1.below south west), anchor=above north west,
legend columns=3, 
legend style={column sep=5pt},
  xlabel={Time step $\tau$},
  ylabel={MSPE},
  legend pos=north east,
  grid style={dashed,gray},
  ymajorgrids,
       width=0.96\columnwidth,
  height=3.9cm,
  ymin=-0.1,
  ymax=4.5,
  xmin=0,
  xmax=10,
  legend style={font=\footnotesize},
  legend cell align={left}]
\addplot[color=NavyBlue,dash dot,line width=1pt,mark=o,mark options={solid}] table [x index=0,y index=1]{data/figure6_mse_timeseries.dat};
\addplot[dashed,color=red,line width=1pt,mark=o,mark options={solid}] table [x index=0,y index=2]{data/figure6_mse_timeseries.dat};
\addplot[color=ForestGreen,line width=1pt,mark=o] table [x index=0,y index=3]{data/figure6_mse_timeseries.dat};
\end{axis}
\end{tikzpicture} 
		\vspace{-0.8cm}\caption{Top: The estimated, the true and the upper bound of the MSPE for a 10\%, 100\%, and 200\% error interval (from bottom to top) around the correct hyperparameter values. Bottom: The comparison in time domain with the 10\% bound.}				\vspace{-0.8cm}
		\label{fig:sim_mse}
	\end{center}
\end{figure}
\vspace{-0.0cm}
\section*{Conclusion}
We derive an upper bound for the mean square prediction error between an estimated GP model and a GP model with unknown covariance function. For the proposed upper bound, no exact knowledge about the underlying covariance function is required. Instead, only a set of possible covariance functions with their hyperparameter sets are necessary. With additional weak assumptions, a closed form solution is provided. A numerical example demonstrates that this bound confines the usually unknown mean square prediction error.

\section*{Acknowledgments}
The research leading to these results has received funding from the ERC Starting Grant ``Control based on Human Models (con-humo)'' agreement n\textsuperscript{o}337654.
\bibliography{mybib}
\bibliographystyle{ieeetr}

\end{document}